\newcommand{\A}{\mathcal{A}}
\newcommand{\transpose}{{\mbox{\tiny T}}}
\newcommand{\cA}{{\mathcal{A}}}
\newcommand{\cD}{{\mathcal{D}}}
\newcommand{\cS}{{\mathcal{S}}}
\newcommand{\cF}{{\mathcal{F}}}
\newcommand{\bA}{\textbf{A}}
\newcommand{\bS}{\textbf{S}}
\newcommand{\bR}{\mathbb{R}}
\newcommand{\bbE}{\mathbb{E}}
\newif\ifnotes\notestrue
\def\htien#1{}
\newtheorem{assumption}{Assumption}
\newtheorem{theorem}{Theorem}
\title{Generalized Maximum Causal Entropy for Inverse Reinforcement Learning}
\author{
Tien Mai$^1$
\and
Kennard Chan$^2$\And
Patrick Jaillet$^3$
\affiliations
$^1$SIS, Singapore Management University\\
$^2$Nanyang Technological University\\
$^3$EECS, Massachusetts Institute of Technology
\emails
atmai@smu.edu.sg,
chan.yanting.kennard@dhs.sg,
jaillet@mit.edu
}
\begin{document}

\maketitle

\begin{abstract}
We consider the problem of learning from demonstrated trajectories with inverse reinforcement learning (IRL). Motivated by a limitation of the classical maximum entropy model in capturing  the structure of the network of states, we propose an IRL model  based on a generalized version of the causal entropy maximization problem, which allows us to generate a class of maximum entropy IRL models. Our generalized model has an advantage of being able to recover, in addition to a reward function, another expert's function that would (partially) capture the impact of the connecting structure of the states on experts' decisions. 
Empirical evaluation on a real-world dataset and a grid-world dataset shows that our generalized model outperforms the classical ones, in terms of recovering  reward functions and demonstrated trajectories. 
\end{abstract}

\section{Introduction}
We are interested in inverse reinforcement learning (IRL) \cite{russell1998learning,abbeel2004apprenticeship,Ziebart2008maximum}, which refers to the problem of learning and imitating experts' behavior by observing  their demonstrated trajectories of states and actions in some planning space. The experts are assumed to make actions by optimizing some accumulated rewards associated with states that they visit and the actions they make. The learner then aims at recovering such rewards to understand how decisions are made, and ultimately to imitate experts' behavior.
The rationale behind IRL is that
although a reward function might be a succinct and generalizable representation of an expert behavior, it is often difficult for the experts to provide their reward functions, as opposed to giving demonstrations.
So far, IRL has been successfully applied in a wide range of problems such as  predicting driver route choice behavior \cite{Ziebart2008maximum} or planning for quadruped robots \cite{ratliff2009learning}.

Maximum entropy IRL \cite{Ziebart2008maximum,ziebart2010_IRL_Causal} is a powerful probabilistic approach that has received a significant amount of attention over the decade. The main advantages of this IRL framework is that it allows the removal of ambiguity between demonstrations and the expert policy and to cast the reward learning as a maximum likelihood estimation problem. One of the interesting properties of the framework is that the action distribution can be interpreted as a solution to a causal entropy maximization problem under constraints on the empirical expectation of the  rewards, which also provides a worst-case prediction log-loss guarantee \cite{ziebart2010_IRL_Causal}.

In fact, through demonstrations, most of the IRL models will return a reward function associated with states and actions, but give no information about the effect of the connecting structure of the states on expert's decisions. In other words, the way states are connected to others may have a significant impact on  expert's policy, but, to the best of our knowledge, this is not captured thoroughly by the classical IRL models. For example, when travelling in a transportation network, an experienced taxi driver may not only consider travelling costs, but also take into consideration the correlation between possible paths. In the next sections, we will provide a simple example to illustrate this issue. This type of issue has been widely investigated by numerous econometrics studies \cite{train2009discrete}.

Motivated by the above issue of the classical IRL models, we propose a generalized IRL model based on the principle of maximum causal entropy.
More precisely, we propose a generalized version of the causal entropy function considered in \cite{ziebart2010_IRL_Causal} and show that solving the corresponding generalized causal entropy maximization problem will yield a formulation to infer  action probabilities for the reward learning problem. Our generalized model is more flexible and robust than the classical ones, in the sense that it allows to recover, in addition to an expert's reward function, a function that may partially capture the correlation between different trajectories. From a theoretical point of view, our generalized model is also consistent with the maximum causal entropy scheme, and also holds a worst-case prediction log-loss guarantee.  We provide experiments using a real-world taxi trajectories and a grid-world dataset. Our results show that the generalized model performs better than other classical IRL ones, in terms of recovering expert's reward functions and recovering demonstrated trajectories.

\noindent \textbf{{Related work.}} The concept of generalized entropy has been investigated by a number of econometrics studies to derive more general classes of demand models
\cite{ge:Fudenberg2015stochastic,ge:Fosgerau2016generalized,ge:Fosgerau2017discrete,ge:Fosgerau2019inverse}.
However,  it seems that we are the first to study generalized models in the context of maximum causal entropy, which involves dynamic decisions, and apply them to IRL. 

Our algorithm directly generalize the maximum causal entropy model proposed in  \cite{ziebart2010_IRL_Causal}, so it is closely related to IRL methods proposed by \cite{Ho2016GAN_IRL}; \cite{Fu2017Robust_IRL}; \cite{Finn2016Guided}; and \cite{Levine2011nonlinearIRL}. The generative adversarial imitation learning algorithm proposed by \cite{Ho2016GAN_IRL} is a powerful approach that allows to  learn directly from demonstration without recovering a reward function. Nevertheless, in many scenarios, a reward function returned from IRL might be useful to infer expert's intentions or to avoid re-optimizing a reward function in a new environment. \cite{Finn2016_connectionIRL} show a connection between generative adversarial networks (GANs) \cite{Goodfellow2014GAN}, maximum entropy IRL and energy-based models. They also propose the adversarial IRL framework that allows to learn a reward function based on the GANs idea. \cite{Fu2017Robust_IRL} develop an algorithm based on this adversarial IRL framework, which provide a way to recover a reward function that is ``robust'' in different dynamic settings. These GANs-based algorithms all rely on the maximum causal IRL framework \cite{ziebart2010_IRL_Causal}, thus can be adapted to use with our generalized IRL model. There are also some methods aiming at learning nonlinear reward functions through, e.g., boosting structured prediction \cite{Bagnell2007boosting}, deep neural networks \cite{Wulfmeier2015_IRL_NN}  or Gaussian processes \cite{Levine2011nonlinearIRL}, which might also benefit from our generalized IRL model.   

\vspace{-1.5em}
\section{Background}\label{sec:background}
An IRL model typically relies on a Markov Decision Process (MDP), which consists of states, actions and transition probabilities when making an action at any state. We first consider an MDP for an agent, defined as $(\cS,\cA,p,R,\gamma)$, where $\cS$ is a set of states $\cS = \{1,2,\ldots,|\cS|\}$, $\cA$ is a finite set of actions, $p:\cS\times \cA\times\cS \rightarrow [0,1]$ is a transition probability function, i.e., $p(s'|a,s)$ is the probability of moving to state $s'\in\cS$ from $s\in \cS$ by performing action  $a\in \A$, $R(a,s|\theta)$ is a reward function of parameters $\theta$ and a feature vector $\cF(a,s)$ associated with making decision $a\in \cA$ at state $s\in\cS$, and $\gamma$ is a discount factor. 


In this work we consider the case of finite time horizon and undiscounted  MDP. We first denote  $\bA,\bS$  as sequences of actions and states:  $\bA = \{a_0,\ldots,a_{T-1}\}$,  $\bS = \{s_0,\ldots,s_{T-1}\}$, where $a_t\in\cA, s_t\in \cS$ are the action and state at time $t\in\{0,\ldots,T-1\}$. The probability of $\bA$ causally conditioned on $\bS$ is defined as 
$P(\bA||\bS) = \prod_{t=0}^{T-1}P(a_t|s_t)$, 
and the causal entropy of $\bA$ conditional on $\bS$ is defined as \cite{Kramer1998directed,Permuter2008Causal} 
$
    H(\bA||\bS) =\mathbb{E}_{\bA,\bS}[\ln P(\bA||\bS)].
$
Then, we seek an action distribution that maximizes the following maximum causal entropy function under a constraint on the empirical expectation of the reward
 \cite{ziebart2010_IRL_Causal}.
\begin{align}
	\underset{P(a_t|s_t)}{\text{maximize}}\qquad &  H(\bA||\bS) &\nonumber \\
	 \text{subject to} \qquad & \bbE_{\bS,\bA}[R(\bS,\bA)] =  \widetilde{\bbE}_{\bS,\bA}[R(\bS,\bA)] & \nonumber\\
	  &  \sum_{a_t\in \cA^t}P(a_t|s_t) = 1,\ \forall s_t,  \nonumber
\end{align}
where $R(\bS,\bA)$ is the accumulated reward of actions $\bA$ and states $\bS$, which is a sum of action/state rewards as
$R(\bS,\bA) = \sum_{t=0}^{T-1}R(a_t,s_t)$, and 
$\cA^t$ is the set of possible actions at time $t$ and $\widetilde{\bbE}[R(\bS,\bA)]$ is the empirical expectation of the reward. \cite{ziebart2010_IRL_Causal} show that solving the above maximum entropy problem will yield a closed-form recursive formulation to infer the action distribution, making the training of the corresponding IRL model tractable.
We note that when the dynamics are deterministic, i.e., the transition probabilities only take values of 0 or 1, then the maximum causal entropy IRL proposed by \cite{ziebart2010_IRL_Causal} is identical to the maximum entropy IRL model introduced in their previous work \cite{Ziebart2008maximum}, which is also an energy-based model over all possible trajectories \cite{Finn2016_connectionIRL}.

\section{Generalized Maximum Causal Entropy}\label{sec:GMCE}
In this section, we will start by showing  a bottleneck of the classical maximum entropy models \cite{Ziebart2008maximum,ziebart2010_IRL_Causal}. We then propose our generalized maximum causal entropy (GMCE) IRL based on the maximum causal entropy principle.
We will also provide an algorithm  that can be used to practically train the GMCE IRL model.  Lastly, we  will take an example to show how the GMCE  gets over the limitation of the classical model.  

\subsection{Bottleneck of the Classical Models}
One of the issues of the maximum causal entropy (MCE) IRL model  is that it only relies on a reward function associated with states and actions and might not be able to capture the structure of the network of states, which would lead to an unreasonable probability distribution over trajectories.
To illustrate this, let us consider an IRL model with deterministic dynamics, i.e., the transition probabilities only take values of 0 or 1. In this context, it is well known that the probability of a trajectory $\sigma$ is
$
P(\sigma) = \exp(R(\sigma))\big/\left(\sum_{\sigma'\in\Omega} R(\sigma')\right)
$
where  $R(\sigma)$ is the accumulated reward of trajectory $\sigma$ and $\Omega$ is the set of all possible trajectories. This implies that for any $\sigma,\sigma'\in\Omega$, $P(\sigma)/P(\sigma') = \exp(R(\sigma))/\exp(R(\sigma'))$, which only depends on the rewards of the two trajectories. This property refers to a well-known issue in econometrics, called the Independence from Irrelevant Alternatives (IIA) property, which implies proportional substitution across trajectories. A number of econometrics studies argued that this property does not hold in general and should be relaxed \cite{train2009discrete,mcfadden1978modeling}.

We will take two simple examples (Fig. \ref{fig:Ex1} and \ref{fig:Ex2})  and bring some  insights from econometrics to further illustrate this issue.
In these examples, we assume that there are three paths going from an initial node (denoted by O) to a terminal node (D).
Links are the states of the model and we number them as in the figures. 
To make the examples simple, we assume that an action is defined as moving from a state to another state, which means that the MDP is deterministic. 
In the left example, there are three possible paths to go from O to D as $\{0,1,4\}, \{0,3,4\}$ and $\{0,2,4\}$ and in Fig. \ref{fig:Ex2} there are also three paths connecting O and D as $\{0,1,4,5\}, \{0,1,3,5\}$ and $\{0,2,5\}$.
We further assume that all the paths have the same rewards, i.e., $R(1)= R(2)=R(3)$ for Ex. 1 and $R(1)+R(3) = R(1)+R(4) = R(2)$ for Ex. 2. 
\vspace{-1em}
\begin{figure}[ht] 
  \begin{subfigure}[b]{0.5\linewidth}
    \centering
    \includegraphics[width=1.0\linewidth]{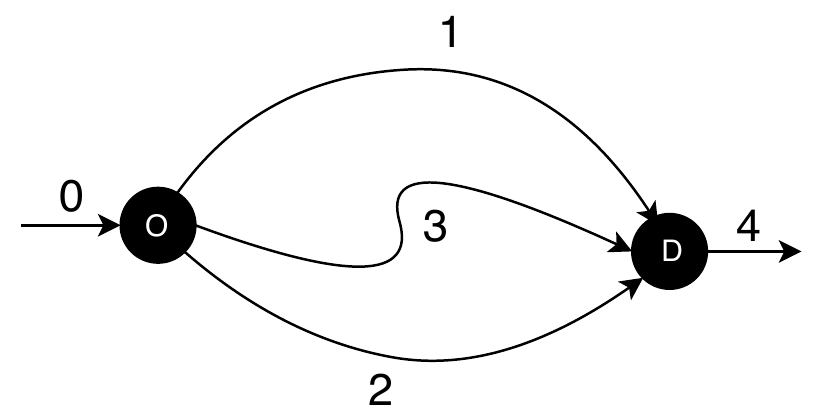} 
    \caption{Example 1} 
    \label{fig:Ex1} 
    \vspace{4ex}
  \end{subfigure}
  \begin{subfigure}[b]{0.5\linewidth}
    \centering
    \includegraphics[width=1.0\linewidth]{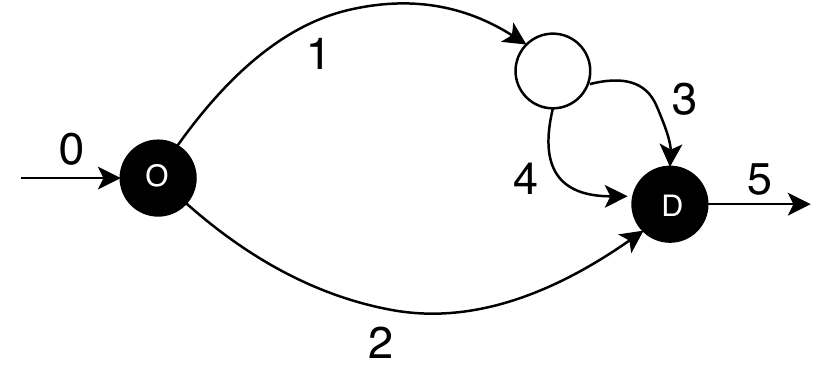} 
    \caption{Example 2} 
    \label{fig:Ex2} 
    \vspace{4ex}
  \end{subfigure} 
  \vspace{-3em}
    \caption{\small Simple examples to illustrate a limitation of the classical IRL model capturing the structure of the network of states.}
    \label{fig:3path-ex}
\end{figure}

Clearly, the MCE IRL model assigns the same probability of 1/3 to each of the three paths in each example. That makes sense for Ex. 1 where there is no overlap between the three paths. However, for Ex. 2, the MCE IRL model will still assign a probability of 1/3 to each of the three paths despite the overlap between two of the paths.  
More precisely, if the rewards of States 3 and 4 approach zero (but the three paths still have the same rewards), then from a behavioral point of view, one would not consider Path $\{0,1,3,5\}$ and $\{0,1,4,5\}$  as two distinct paths. Thus,
we expect that the probabilities of Path $\{0,2,5\}$ should approach 1/2 and the properties of Paths $\{0,1,3,5\}$ and $\{0,1,4,5\}$ should be close to 1/4.   
These probabilities cannot be archived by the standard IRL model as the three paths have the same rewards.
A more complicated example that shows how the correlation structure of  the states affects people's behaviour can be found in \cite{Mai2015_nestedRL}.

\subsection{Generalized Maximum Causal Entropy}
To deal with the aforementioned issue,  we generalize the MCE IRL model. To this end, let us define a generalized causal entropy function of actions $\bA$ conditional on a sequence of states $\bS$
\begin{equation}\label{eq:H-general}
    H^G(\bA||\bS)=\bbE_{\bA,\bS}\left[-\sum_{t=0}^T \ln G(P(a_t|s_t)|s_t)\right],
\end{equation}
where $G(p|s):\mathbb{R}_+ \times \cS \rightarrow \mathbb{R}_+$. Note in particular that while we will ultimately be interested in the case where $p\in [0,1]$, we extend the definition of $G$ to encompass all $p\in \mathbb{R}_+$. To derive closed-form recursive equations for the action probabilities, we require $G$ to satisfy  the following conditions (Assumption \ref{assum:as1}), for any $p\in \mathbb{R}_+$ and $s\in \cS$. 
\begin{assumption}\label{assum:as1}
Function $G(p|s):\mathbb{R}_+ \times \cS \rightarrow \mathbb{R}_+$ satisfies the following conditions
\begin{itemize}[align=left]
    \item [(i)] $G(p|s)$ and $\partial G(p|s)/\partial p$ both exist and are positive.
    \item[(ii)]$G(p|s)$ is invertible, i.e., there exists an unique function $G^{-1}(h|s):\mathbb{R}_+ \times \cS \rightarrow \mathbb{R}_+$ such that 
    $
    G^{-1}(G(p|s)|s) = p,\ \forall p\in\bR_+.
    $
    \item[(iii)] There exists a mapping $\mu:\cS \rightarrow \mathbb{R}_+$ such that 
    $
    {\partial \ln(G(p|s))}/{\partial p} = \mu(s)/p.
    $
\end{itemize}
\end{assumption}
Note that this implies there exists a mapping $\nu: \cS \rightarrow \mathbb{R}$ such  that $G(p|s)=e^{\nu(s)}p^{\mu(s)}$. Moreover, the above conditions also imply that function $G(p|\cdot)$ is multiplicative, i.e., $G(p_1p_2|s) = G(p_1|s)G(p_2|s)$, $\forall p_1,p_2\in\bR_+$. This property will be useful for deriving a closed-form solution for the generalized maximum causal entropy problem. 

Now, we aim at solving the following generalized causal entropy maximization problem under the generalized causal entropy function defined in Eq. \eqref{eq:H-general}. A solution to this problem will provide a way to infer a probability distribution over actions and states
\begin{align}
	\underset{P(a_t|s_t),\ \forall a_t,s_t}{\text{maximize}}\qquad &  H^G(\bA||\bS) &\label{prob:general-causal}\tag{P2}\\
	 \text{subject to} \qquad & \bbE_{\bS,\bA}[R(\bS,\bA)] =  \widetilde{\bbE}_{\bS,\bA}[R(\bS,\bA)] & \nonumber\\
	  &  \sum_{a_t\in \cA^t}P(a_t|s_t) = 1 &\ \forall t, s_t   \nonumber
\end{align}
The following theorem indicates that, under the conditions imposed on function $G(\cdot)$ in Assumption \ref{assum:as1}, there are closed forms to compute an optimal solution to (P2), making the training of the GMCE IRL model practically tractable. 
\begin{theorem}\label{thr:theorem1}
If function $G(p|s)$ satisfies Assumption \ref{assum:as1} and  $P(a_t|s_t)$, $\forall a_t\in\cA$, $s_t\in\cS$ is an optimal solution to the generalized maximum causal entropy problem Eq. \eqref{prob:general-causal}, then these probabilities can be computed by  the following recursive equations
{\small \begin{equation}\label{eq:general-entropy-recursion}
    \begin{aligned}
     Y_{a_t|s_t} &= \lambda R(s_t,a_t) + \mathbb{E}_{s_{t+1}}[\ln G(Z_{s_{t+1}}|s_{t+1})]\\
Z_{a_t|s_t} &= G^{-1} \left(e^{Y_{a_t|s_t}}|s_t\right),\ Z_{s_t} = \sum_{a_t \in\cA^t} Z_{a_t|s_t} \\
P(a_t|s_t) &= Z_{a_t|s_t}/Z_{s_t},\ \forall t, a_t,s_t, \\
    \end{aligned}
\end{equation}}
    where $\lambda$ is a scalar that depends on the empirical expectation of the reward $\widetilde{\bbE}_{\bS,\bA}[R(\bS,\bA)]$.
\end{theorem}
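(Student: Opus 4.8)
The plan is to attack the generalized maximum causal entropy problem~\eqref{prob:general-causal} by Lagrangian relaxation of the single reward-matching constraint, and then to solve the resulting policy optimization by backward dynamic programming, exploiting the fact that both $H^G$ and the reward factor over time. Attaching a scalar multiplier $\lambda$ to $\bbE_{\bS,\bA}[R(\bS,\bA)]-\widetilde{\bbE}_{\bS,\bA}[R(\bS,\bA)]$ and keeping the normalization constraints $\sum_{a_t\in\cA^t}P(a_t|s_t)=1$ explicit, the objective becomes, up to a policy-independent constant, $\bbE_{\bS,\bA}\big[\sum_t\big(\lambda R(s_t,a_t)-\ln G(P(a_t|s_t)|s_t)\big)\big]$ (the sign convention for $\lambda$ is immaterial). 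By Assumption~\ref{assum:as1}(iii) together with (i) we have $G(p|s)=e^{\nu(s)}p^{\mu(s)}$ with $\mu(s)>0$, which is what makes the per-stage subproblems below concave, hence well posed; $\lambda$ itself is then determined after the fact by re-imposing the reward constraint on the recovered policy, which is the sense in which ``$\lambda$ depends on $\widetilde{\bbE}_{\bS,\bA}[R]$''.

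For a fixed $\lambda$ I would solve the relaxed problem by backward induction on $t=T-1,\dots,0$, carrying a value-to-go at each state. Writing $Y_{a_t|s_t}=\lambda R(s_t,a_t)+\bbE_{s_{t+1}}[(\text{value-to-go out of }s_{t+1})]$, the stage-$t$ optimization at a state $s_t$ is
\[
\max_{q\in\Delta(\cA^t)}\ \sum_{a_t}q(a_t)\big(Y_{a_t|s_t}-\ln G(q(a_t)|s_t)\big).
\]
Expanding $\ln G(q|s)=\nu(s)+\mu(s)\ln q$ turns this into a standard Gibbs program with temperature $\mu(s_t)$; attaching a multiplier to $\sum_{a_t}q(a_t)=1$ and setting the gradient to zero gives $q(a_t)\propto\exp(Y_{a_t|s_t}/\mu(s_t))$, which by invertibility of $G$ (Assumption~\ref{assum:as1}(ii)) can be rewritten as $q(a_t)\propto G^{-1}(e^{Y_{a_t|s_t}}|s_t)$. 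Setting $Z_{a_t|s_t}=G^{-1}(e^{Y_{a_t|s_t}}|s_t)$, $Z_{s_t}=\sum_{a_t}Z_{a_t|s_t}$ and $P(a_t|s_t)=Z_{a_t|s_t}/Z_{s_t}$ is exactly the recursion~\eqref{eq:general-entropy-recursion}. It remains to close the induction by identifying the value-to-go out of state $s_{t+1}$ with $\ln G(Z_{s_{t+1}}|s_{t+1})$: plugging the stage-$(t+1)$ optimizer back into its objective and using the multiplicative property $G(p_1p_2|s)=G(p_1|s)G(p_2|s)$ to merge the $Z_{a_{t+1}|s_{t+1}}$ and $Z_{s_{t+1}}$ factors, the optimal stage value collapses to $\ln G(Z_{s_{t+1}}|s_{t+1})$, with the natural terminal condition (no value beyond the horizon). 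Unrolling from $t=T-1$ down to $t=0$ yields~\eqref{eq:general-entropy-recursion}, and substituting the resulting policy into $\bbE_{\bS,\bA}[R(\bS,\bA)]=\widetilde{\bbE}_{\bS,\bA}[R(\bS,\bA)]$ fixes $\lambda$.

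The step I expect to be the main obstacle is establishing \emph{global} optimality of this backward-DP policy, not merely that it is a Lagrangian stationary point. Since the occupancy measure over $(s_t,a_t)$ is itself a function of the policy, $H^G$ is not manifestly jointly concave in the collection $\{P(a_t|s_t)\}_{t,s_t}$, so stagewise first-order conditions need not be sufficient on their own; the remedy --- as in the classical maximum causal entropy argument of~\cite{ziebart2010_IRL_Causal} --- is to exploit the \emph{causal} (time-forward) structure through a telescoping value-function bound, showing that for \emph{every} feasible policy the relaxed objective is at most $\bbE_{s_0}[\ln G(Z_{s_0}|s_0)]$ (plus the constant involving $\widetilde{\bbE}_{\bS,\bA}[R]$), with equality precisely for the policy generated by~\eqref{eq:general-entropy-recursion}. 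The delicate bookkeeping inside that bound is handling the part of $\ln G$ that depends on the state alone: it must be absorbed into the continuation values (equivalently, treated as received on \emph{arrival} at a state) so that each per-stage problem keeps the pure Gibbs form above --- this is exactly where Assumption~\ref{assum:as1}(iii) and the multiplicativity of $G(p|\cdot)$ are indispensable, and carrying it through consistently is the crux. As a sanity check, for $G(p|s)=p$ the recursion~\eqref{eq:general-entropy-recursion} should reduce to the classical equations of~\cite{ziebart2010_IRL_Causal}.
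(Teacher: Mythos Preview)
Your proposal is correct and reaches the same recursion, but it is organized differently from the paper's own argument. The paper attacks the relaxed objective $\cD=H^G(\bA||\bS)+\lambda\,\bbE_{\bS,\bA}[R(\bS,\bA)]$ \emph{globally}: it differentiates $\cD$ with respect to each $P(a_t|s_t)$, uses the Lagrange condition that $\partial\cD/\partial P(a_t|s_t)$ must be equal across $a_t\in\cA^t$, and obtains $P(a_t|s_t)\propto G^{-1}\!\big(\exp(\lambda R(s_t,a_t)+U(s_t,a_t))\,\big|\,s_t\big)$ where $U(s_t,a_t)$ is the entire tail expectation $\sum_{k>t}\bbE[\ln G(P(a_k|s_k)|s_k)-\lambda R(s_k,a_k)\mid a_t,s_t]$; only afterwards is multiplicativity of $G(p|\cdot)$ invoked to collapse this tail to $\bbE_{s_{t+1}}[\ln G(Z_{s_{t+1}}|s_{t+1})]$. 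You instead proceed by backward dynamic programming: you carry a value-to-go, solve each stage as a temperature-$\mu(s_t)$ Gibbs program, and use multiplicativity to identify the optimal stage value with $\ln G(Z_{s_{t+1}}|s_{t+1})$. What your route buys is that each stagewise subproblem is manifestly strictly concave (so the per-stage optimizer is genuinely a maximizer), and your telescoping bound then upgrades this to global optimality --- a point the paper's sketch leaves at the level of first-order stationarity. What the paper's route buys is brevity: one differentiation followed by one algebraic reduction. Your flag about the state-only part $\nu(s)$ of $\ln G$ is well placed; it is exactly the place where the paper leans on the multiplicativity statement $G(p_1p_2|s)=G(p_1|s)G(p_2|s)$, and your proposal to absorb that term on arrival is the natural way to keep the recursion in the stated form.
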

\begin{proof}(\textit{sketched}).
It suffices to find $P(a_t|s_t)$ that maximize $\cD = H^G(\bA||\bS)+\lambda \bbE_{\bS, \bA}[R(\bS,\bA)]$ for a constant $\lambda$. We denote
\[
\cD = \bbE_{\bS,\bA}\left[-\sum_{t=0}^T \ln G(P(a_t|s_t)|s_t)+\lambda R(\bS,\bA) \right].
\]
Using the method of Lagrange Multipliers, for each $s_t\in\cS$, we require that $\partial \cD/\partial P({a_t|s_t})$ are equal over all actions $a_t\in \cA^t$.  Taking the derivative of $\cD$ with respect to $P(a_t|s_t)$, removing parts that are equal over $a_t\in\cA^t$ and using Assumption \ref{assum:as1}, we  obtain
\begin{equation}\label{eq:th1-eq1}
P(a_t|s_t) \propto G^{-1}\left(\exp\left(\lambda R(s_t,a_t)+ U(s_t,a_t)\right)|s_t\right),
\end{equation}
where 
$U(s_t,a_t) = \sum_{k=t+1}^{T-1}\bbE_{s_k,a_k}\Big[\ln G(P(a_k|s_k)|s_k)-\lambda R(s_k,a_k)| a_t,s_t\Big]$. 
Now, by the multiplicativity of $G(p|\cdot)$ in $p$, we can reduce $U(s_t,a_t)$ as 
\begin{equation}\label{eq:th1-eq2}
U(s_t,a_t) = \sum_{s_{t+1}}P(s_{t+1}|a_t,s_t)\ln G(Z_{s_{t+1}}|s_{t+1}).    
\end{equation}
Combining Eq. \eqref{eq:th1-eq1} and Eq. \eqref{eq:th1-eq2}, we obtain the desired results.
\end{proof}

In fact, if $G(p|s) = p$, then the GMCE becomes the classical MCE IRL model. 
It is also possible to prove a worst-case prediction log-loss guarantee for the solution given in Theorem \ref{thr:theorem1}, as follows.
\begin{theorem}\label{thr:theorem2}
A solution to Eq. \eqref{prob:general-causal}  minimizes the following generalized worst-case prediction log-loss
\begin{equation}\label{eq:th2-eq0}
    \inf_{\substack{Q\in \Delta^T}} \sup_{\substack{P\in \Delta^T\\\widetilde{\bbE}^{P}(R) = \eta}}\bbE^P_{\bA,\bS}\left[-\sum_{t=0}^{T-1}\ln G(Q(a_t|s_t)|s_t)\right],
\end{equation}
where $\widetilde{\bbE}^{P}(R) = \bbE^P_{\bA,\bS}[R(\bA,\bS)]$ and $\eta = \widetilde{\bbE}_{\bS,\bA}[R(\bS,\bA)]$ (i.e., empirical expectation reward) and $\Delta^T = \{P(a_t|s_t), a_t\in \cA^t,\ s_t\in \cS, \sum_{a_t\in \cA^t}P(a_t|s_t) = 1\}$.
\end{theorem}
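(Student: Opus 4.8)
The plan is to prove this as a minimax saddle-point identity, adapting the robust-Bayes argument behind the classical result but carrying the generalized entropy $H^G$ through it. Write $\mathcal{P}_\eta = \{P\in\Delta^T : \widetilde{\bbE}^P(R) = \eta\}$ for the adversary's feasible set and $\mathcal{L}(Q,P) = \bbE^P_{\bA,\bS}\big[-\sum_{t=0}^{T-1}\ln G(Q(a_t|s_t)|s_t)\big]$, so that the quantity in \eqref{eq:th2-eq0} is $\inf_{Q\in\Delta^T}\sup_{P\in\mathcal{P}_\eta}\mathcal{L}(Q,P)$, and recall the always-true inequality $\inf_Q\sup_P\mathcal{L}\ge\sup_P\inf_Q\mathcal{L}$.

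First I would evaluate the right-hand side. Using $G(p|s)=e^{\nu(s)}p^{\mu(s)}$ from Assumption~\ref{assum:as1}, $\ln G(Q(a_t|s_t)|s_t)=\nu(s_t)+\mu(s_t)\ln Q(a_t|s_t)$, so for fixed $P$ the map $Q\mapsto\mathcal{L}(Q,P)$ splits into a term not depending on $Q$ plus $-\sum_{t}\sum_{s_t}P(s_t)\mu(s_t)\sum_{a_t}P(a_t|s_t)\ln Q(a_t|s_t)$. Since $\mu(s_t)>0$, Gibbs' inequality (nonnegativity of the per-state KL divergence) makes $Q=P$ the minimizer, and $\mathcal{L}(P,P)$ is precisely $H^G(\bA||\bS)$ evaluated at $P$; hence $\sup_P\inf_Q\mathcal{L}=\sup_{P\in\mathcal{P}_\eta}H^G(\bA||\bS)$, the optimal value of \eqref{prob:general-causal}. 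Actions with $P(a_t|s_t)=0$ are handled by the usual $0\ln 0=0$ convention.

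The substantive step is the matching upper bound, obtained by showing that the optimizer $P^\star$ of \eqref{prob:general-causal} furnished by Theorem~\ref{thr:theorem1} is a saddle point, i.e.\ that $\sup_{P\in\mathcal{P}_\eta}\mathcal{L}(P^\star,P)$ already equals the optimal value of \eqref{prob:general-causal}. The mechanism is that once the recursion of Theorem~\ref{thr:theorem1} is substituted, $-\sum_t\ln G(P^\star(a_t|s_t)|s_t)$ becomes an affine function of the accumulated reward plus a path-independent constant: by multiplicativity of $G$, $\ln G(P^\star(a_t|s_t)|s_t)=\ln G(Z_{a_t|s_t}|s_t)-\ln G(Z_{s_t}|s_t)$, and $\ln G(Z_{a_t|s_t}|s_t)=Y_{a_t|s_t}=\lambda R(s_t,a_t)+\bbE_{s_{t+1}}[\ln G(Z_{s_{t+1}}|s_{t+1})]$ directly from the definition of $Z_{a_t|s_t}$. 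Summing over $t$ and taking $\bbE^P$ for any $P\in\Delta^T$ (whose state marginals are induced by the fixed transition dynamics, so $\bbE^P[\bbE_{s_{t+1}}[\cdot\mid a_t,s_t]]=\bbE^P[\cdot]$), the $\ln G(Z_{s_t}|s_t)$ terms telescope and leave only boundary constants (the fixed start state and the horizon-$T$ terminal condition on $Z_{s_T}$) together with $-\lambda\widetilde{\bbE}^P(R)$. Thus $\mathcal{L}(P^\star,P)=c-\lambda\widetilde{\bbE}^P(R)$ for a constant $c$, which on $\mathcal{P}_\eta$ is the single value $c-\lambda\eta=\mathcal{L}(P^\star,P^\star)=H^G(\bA||\bS)$ at $P^\star$.

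Chaining the two steps, $\inf_Q\sup_P\mathcal{L}\le\sup_{P\in\mathcal{P}_\eta}\mathcal{L}(P^\star,P)=\sup_{P\in\mathcal{P}_\eta}H^G=\sup_P\inf_Q\mathcal{L}\le\inf_Q\sup_P\mathcal{L}$, so equality holds throughout and the infimum is attained at $Q=P^\star$, the solution to \eqref{prob:general-causal}. I expect the main obstacle to be the bookkeeping in this telescoping argument: one must track that the adversary ranges only over action conditionals while the state distribution is pinned down by the MDP dynamics, and handle the horizon boundary terms on $Z_{s_T}$ consistently; by contrast, the Gibbs step is routine. It is worth emphasizing that precisely the structure guaranteed by Assumption~\ref{assum:as1}—the form $G(p|s)=e^{\nu(s)}p^{\mu(s)}$, positivity of $\mu$, and multiplicativity—is what makes both steps work, so the generalized statement costs essentially nothing beyond the classical one.
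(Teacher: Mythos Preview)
Your argument is correct, but it proceeds along a genuinely different line from the paper's. Both proofs share Step~1: for fixed $P$, the inner infimum over $Q$ is attained at $Q=P$ (the paper derives this from the first-order Lagrangian condition; you invoke Gibbs' inequality, which is the same computation). From there the paper simply observes that $\mathcal{L}(Q,P)$ is convex in $Q$ and linear---hence concave---in $P$, and applies von~Neumann's minimax theorem (following \cite{Grunwald2004game}) to interchange $\sup$ and $\inf$; it never needs the explicit form of the optimizer from Theorem~\ref{thr:theorem1}. You instead construct the saddle point directly: plugging the recursion of Theorem~\ref{thr:theorem1} into $\mathcal{L}(P^\star,P)$ and telescoping shows that this quantity depends on $P$ only through $\widetilde{\bbE}^P(R)$, hence is constant on $\mathcal{P}_\eta$, which pins the $\inf\sup$ to the $\sup\inf$ without any minimax machinery. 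Your route is more constructive and explains \emph{why} $P^\star$ is a saddle point, at the price of depending on Theorem~\ref{thr:theorem1} and of the boundary bookkeeping you flag (the initial-state term is policy-independent, and the terminal term $\ln G(Z_{s_T}|s_T)=\nu(s_T)$ vanishes in the paper's working specification $G(p|s)=p^{\mu(s)}$; in the general case one must check it does not reintroduce $P$-dependence). The paper's route is shorter and self-contained but leans on the minimax theorem and leaves the convexity/compactness hypotheses implicit.
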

\begin{proof}(\textit{sketched}).
We can  follow the same strategy in \cite{Grunwald2004game} to prove the result. First, one can show that 
\begin{equation}\label{eq:th2-eq1}
\inf_{Q\in\Delta^T}\sum_{\bA, \bS} P(\bA,\bS)\left[-\sum_{t=0}^{T-1}\ln G(Q(a_t|s_t)|s_t)\right]    
\end{equation}
is only achieved uniquely at $Q=P$. This result can proved by taking the derivative of the Lagrangian function with respect to a variable $Q(a_t|s_t)$. Setting this derivative to zero  and using Assumption \ref{assum:as1} we can show that if $Q$ is a solution to Eq. \eqref{eq:th2-eq1}, then  $Q(a_t|s_t)$ needs to be equal to $P(a_t|s_t)$ for all $a_t,s_t$.

Thus, the generalized maximum causal entropy (P2) can be written as 
\begin{equation}\label{eq:th2-eq3}\small 
    \sup_{\substack{P\in \Delta^T\\ \widetilde{\bbE}^{P}(R) = \eta}} \inf_{\substack{Q\in \Delta^T}}\sum_{\bA,\bS}{P(\bA,\bS)}\left[-\sum_{t=0}^{T-1}\ln G(Q(a_t|s_t)|s_t)\right].
\end{equation}
Now, we see that the above objective function is convex in $Q$ (if we fix $P$) and concave in $P$ (if we fix $Q$), so using the \textit{Neumann's minimax theorem}, we can switch the sup-inf order and obtain an equivalent inf-sup problem, which is Eq. \eqref{eq:th2-eq0}. As we have seen, 
$Q=P$ is the unique solution to the infimum problem of Eq. \eqref{eq:th2-eq3} and $P$ that achieve the supremum of Eq. \eqref{eq:th2-eq3} is a solution to the maximum causal entropy problem (P2). This leads to our desired result.  
\end{proof}

Theorem \ref{thr:theorem2} says that the generalized maximum causal entropy can be viewed as a \textit{zero-sum game} where the opponent chooses a distribution over actions/states to maximize the predictor's generalized log-loss value, and the predictor tries to choose a distribution to minimize it. 

\subsection{Learning Algorithm}
We describe the main steps for computing the log-likelihood and its gradient in Algorithm \ref{algo:GenMaxCauEntIRL}. The algorithm performs a backward procedure from  $t=T$ to $t=0$. To make the algorithm general, we just assume that reward $R(a_t,s_t)$ is a function of  feature vector $\cF(a_t,s_t)$ and parameter $\theta$. 
Here, we also assume that $G(p|s)$ is a function of $p$, a feature vector  associated with  state $s$ and some parameters to be inferred through the training. That is, we can write $G(p|s) = G(p|s,\theta')$, where $\theta'$ is a vector of parameters of its own. The gradient $\nabla_\theta G^{-1}(e^{Y_{a_t|s_t}}|s_t)$ in Eq. \eqref{eq:algo-eq1} refers to the gradient vector of $G$ with respect to its own parameters $\theta'$. The gradient vector of the the log-likelihood can be straightforwardly derived from the recursive equations in Theorem \ref{thr:theorem1}.

\begin{algorithm}[htb]\small 
\caption{\textit{Log-likelihood and gradient computation}}
\label{algo:GenMaxCauEntIRL}
\begin{algorithmic}[1]
\For{\texttt{$t=T,...,1$}}
     \If{$t = T$}
 \State $\forall a_t,s_t$, set $Y_{a_t|s_t} = R(a_t,a_t), U_{a_t|s_t} = \nabla_\theta R(a_t,a_t)$ and $Z_{s_t} = 1$, $D_{s_t}=0$
\Else
$\ \forall a_t,s_t$
\[
\begin{aligned}
&Y_{a_t|s_t}\leftarrow R(s_t,a_t)
 +\sum_{s_{t+1}} p(s_{t+1}|a_t,s_t) \ln G(Z_{s_{t+1}}|s_{t+1})\\
& E_{s_t,s_{t+1}} \leftarrow \left.\frac{\partial G^{-1}(z|s_t)}{\partial z} \right\rvert_{{z = Z_{s_{t+1}}}}\frac{D_{s_{t+1}}}{G(Z_{s_{t+1}}|s_{t+1})}\\
& U_{a_t|s_t}\leftarrow \nabla_\theta R(s_t,a_t)+\sum_{s_{t+1}\in\cS} p(s_{t+1}|a_t,s_t)E_{s_t,s_{t+1}}
 \end{aligned}
 \]
\EndIf
\State For all $a_t\in \cA^t,s_t\in \cS$
\begin{align}
  Z_{a_t|s_t}&\leftarrow G^{-1}(e^{Y_{a_t|s_t}});\ 
  Z_{s_t}\leftarrow \sum_{a_t}Z_{a_t|s_t} s\nonumber \\
  D_{a_t|s_t}&\leftarrow \left.\frac{\partial G^{-1}(z|s_t)}{\partial z} \right\rvert_{{z = e^{Y_{a_t|s_t}}}}e^{Y_{a_t|s_t}} U_{a_t|s_t}\nonumber\\
  &\quad + \nabla_\theta G^{-1}(e^{Y_{a_t|s_t}}|s_t)\label{eq:algo-eq1} \\
  D_{s_t}&\leftarrow \sum_{a_t} D_{a_t|s_t}\nonumber
\end{align}
\EndFor
\State For any observation $(\widetilde{s}_t,\widetilde{a}_t)$ \Comment{LL and its gradients}
\begin{align}
    \ln P(\widetilde{a}_t|\widetilde{s}_t)& \leftarrow Z_{\widetilde{a}_t|\widetilde{s}_t}/Z_{\widetilde{s}_t},\nonumber \\ 
    \nabla_\theta\ln P(\widetilde{a}_t|\widetilde{s}_t) &\leftarrow {D_{\widetilde{a}_t|\widetilde{s}_t}}/{Z_{\widetilde{a}_t|\widetilde{s}_t}} - {D_{\widetilde{s}_t}}/{Z_{\widetilde{s}_t}}\nonumber
\end{align}
\end{algorithmic}
\end{algorithm}

As having said, the conditions in Assumption \ref{assum:as1} also implies that $G$ has the form $G(p|s) = e^{v(s)}p^{\mu(s)}$. One selection that would be of interest is $G(p|s) = p^{(\theta^{\mu})^\transpose \cF^{\mu}(s)}$, where $\theta^{\mu}$ is a vector of parameters to be inferred and $\cF^{\mu}(s)$ is a feature vector associated with state $s\in\cS$. To capture the structure of the network of states, $\cF^{\mu}(s)$ may contain some features representing the ``overlapping-level'' of the state, or sub-networks that $s$ belongs to. Such  features have been studied in the context of route choice modeling \cite{Fosgerau2013_RL}. 
We also denote $\theta^R$ as the parameter vector for the reward function and in a linear setting, we can write $R(a_t,s_t) = (\theta^R)^\transpose \cF(a_t,s_t)$.  The inverse of $G(p|s)$ becomes  $G^{-1}(p|s) = \exp\left( \ln p/((\theta^{\mu})^\transpose \cF^{\mu}(s))\right)$.

\subsection{Illustrative Example}
In the following we show how the GMCE IRL model gets over the aforementioned bottleneck of the classical models. 
We take the example in Fig. \ref{fig:Ex2} and keep the assumption that all the three paths from State 0 to State 5 have the same rewards. More precisely, we set $R(0) = R(5) = 0$, $R(2) = -5$, $R(1)=-4$ and $R(3)=R(4)=-1$. 
As being said, under the MCE IRL model, all the  three paths have the same probabilities of 1/3. Nevertheless,  it seems more reasonable to have a probability of more than 1/3 for Path $\{0,2,5\}$. We now show how this can be achieved by our GMCE model. 

\begin{figure}[ht] 
\centering
  \begin{subfigure}[b]{0.45\linewidth}
    \centering
    \includegraphics[width=1.0\linewidth]{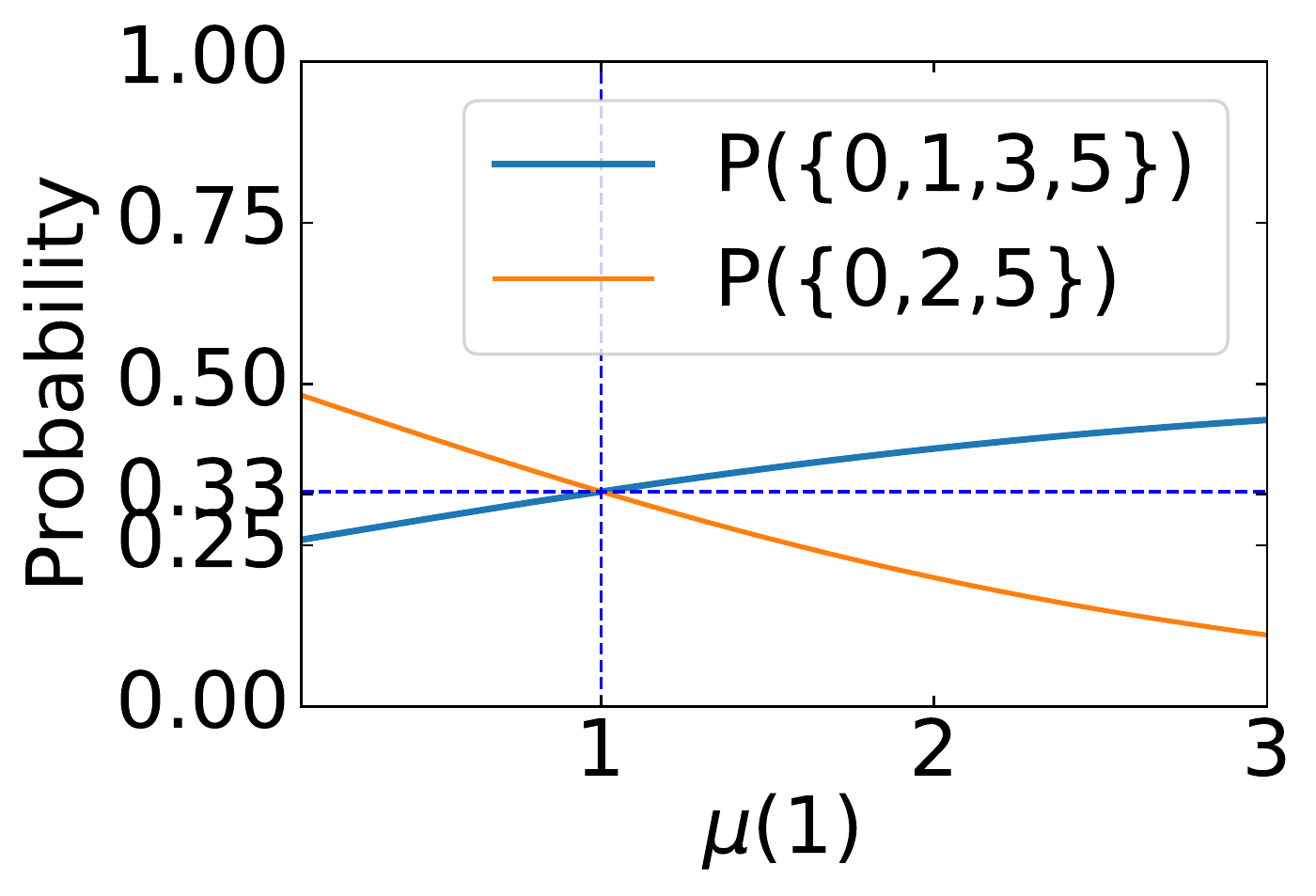} 
  \end{subfigure} 
  \begin{subfigure}[b]{0.45\linewidth}
    \centering
    \includegraphics[width=1.0\linewidth]{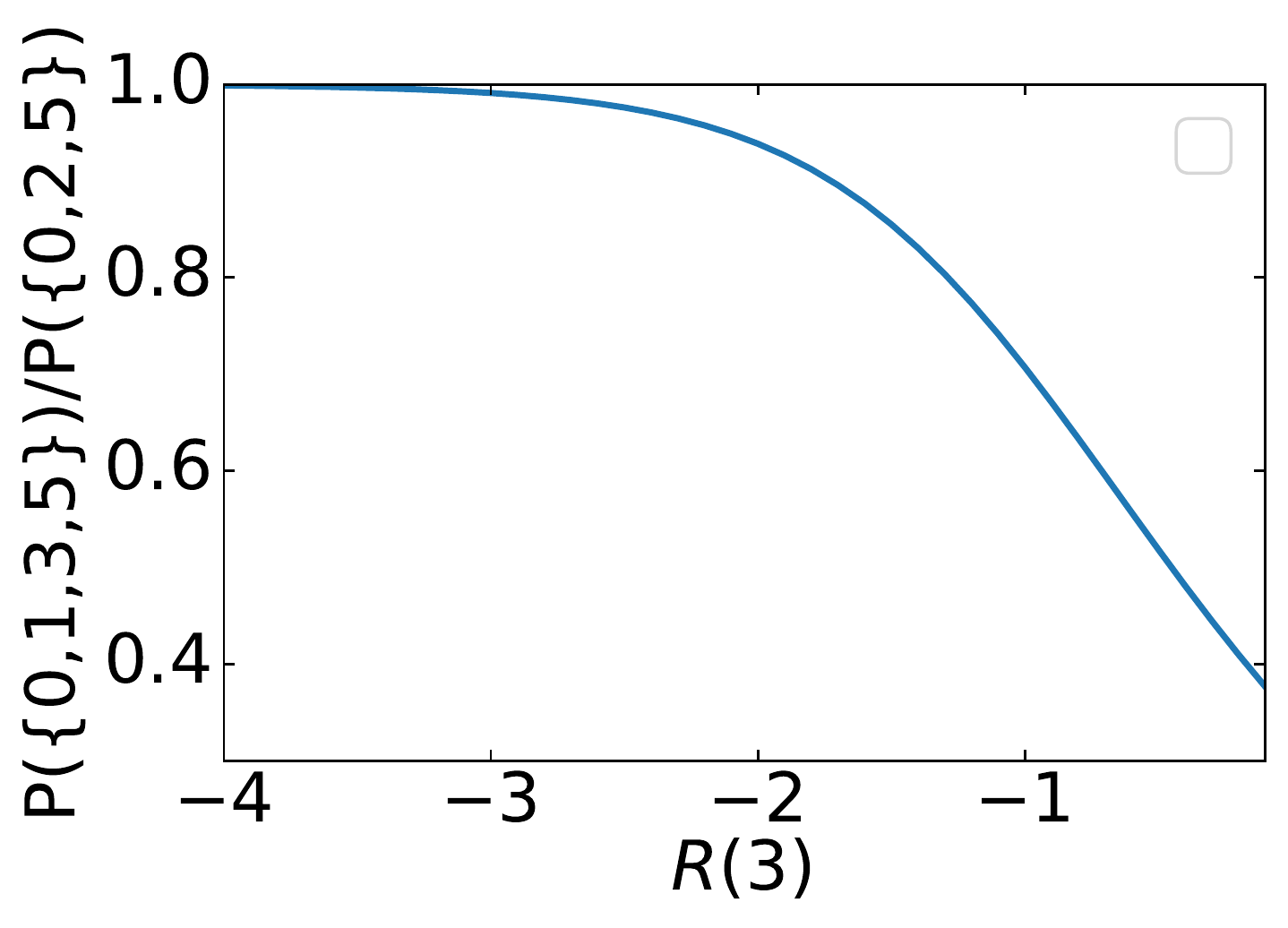} 
  \end{subfigure} 
    \caption{\small Example of probability distribution over three simple paths.
    }
    \label{fig:3path-ex-new-model}
\end{figure}

We take the GMCE IRL model specified by $G(p|s) = p^{\mu(s)}$. Since the two paths $\{0,1,3,5\}$ and $\{0,1,4,5\}$ all go though State 2 but  Path $\{0,2,5\}$ does not overlap with any other paths, we vary $\mu(1)$ while keeping $\mu(s)=1$ for all $s\neq 1$. The probabilities of the three paths with respect to different $\mu(1)$ are plotted in the left sub-figure of Fig. \ref{fig:3path-ex-new-model}, noting that $P(\{0,1,3,5\})$ always equals to $P(\{0,1,4,5\})$. Clearly, $P(\{0,2,5\}) > 1/3$ with  $\mu(1)<1$ and $P(\{0,2,5\}) > 1/3$ otherwise. When $\mu(1)$ goes 0, the probability of $\{0,2,5\}$ approaches 1/2 and the probabilities of $\{0,1,3,5\}$ and $\{0,1,4,5\}$ approach 1/4, which would be more reasonable path probabilities given the network structure.

To illustrate how the GMCE model relaxes the IIA issue  from the standard IRL model, we choose $\mu(1)=0.5$ and set $\mu(s)=1$ for all $s\neq 1$. 
The right sub-figure of Fig. \ref{fig:3path-ex-new-model} plots ratios between $P(\{0,1,3,5\})$ and $P(\{0,2,5\})$ with different $R(4)$ (i.e. reward of an irrelevant state). This ratio goes from 1 to 0.4 when we increase the $R(4)$ from -4 (low reward) to 0 (high reward), noting that if $\mu(1)=1$ (i.e. MCE model), then these ratios are always equal to 1 regardless the reward of State 4. These values are rational from a behavioral point of view, as when $R(4)$ is very low, we expect that Path $\{0,1,4,5\}$ would become unlikely to be chosen, and Paths $\{0,1,4,5\}$ and $\{0,2,5\}$ would be similar in terms of attractiveness. On the contrary, if $R(4)$ is very high, then due to the overlap between $\{0,1,4,5\}$ and $\{0,1,3,5\}$, Path $\{0,1,4,5\}$ is very unlikely to be chosen and we could expect that Path $\{0,1,3,5\}$ would be less attractive than Path  $\{0,2,5\}$, even though they have the same rewards.

In general, the value $\mu(s)$  would affect the attractiveness of all the trajectories passing through that state. So, by recovering such  values from demonstrated trajectories, we would expect to better learn how the structure of the network of states affects experts' behavior. Our experiments below  also show that the GMCE model performs better  than the classical IRL models in recovering expert's trajectories.

\section{Experiments}
This section evaluates the GMCE IRL model using two different datasets, namely, a real-world dataset which contains trajectories of taxi drivers (we shall refer to this dataset as the Transport dataset hereafter) and a simulated dataset obtained from a classical grid-world.
We use the GMCE model with linear-in-parameters reward functions. The model has two vectors of parameters to be inferred through the training, namely, $\theta^R$ for the action/state reward function and $\theta^{\mu}$ for $\mu(s)$. For the sake of comparison, we  will compare our generalized model with the MCE model  \cite{ziebart2010_IRL_Causal}. 
In the following, we first describe our datasets, and then show the comparison results. 
\subsection{Datasets and Experimental Settings}
We will evaluate the models based on log-likelihood and path-matching. We first split each dataset into a training set and a test set. The training set is first used to train the models. Next, for each trajectory in the test set, we feed its first state and last state to the models. Each model will then generate its most likely path based on the first and last states given. Matching is then performed between the particular trajectory from the test set and each of the most likely paths from the models. For each of the most likely paths, we count the number of states which also appeared in the test trajectory. Each count is then divided by the length of the test trajectory in order to give us a “percentage of matching”. The average matching metric is computed by taking the average of all the percentages of matching computed from every trajectory in the test set for a particular model. The 90\% Matching metric is the count of all percentages of matching that are greater or equal to 90\%, divided by the total number of percentages of matching.

\noindent \textbf{Transport dataset.}
The Transport dataset contains a total of 1832 trajectories of taxi drivers. The road network consists of 7288 links, which are regarded as states in our model. At each state, the set of available actions for a taxi driver is to move to one of the connected next states with no uncertainty. This means that the corresponding MDP is deterministic in nature. Four features are used to describe each of the states, namely,  left-turn, U-turn,  incident-constant, and travel time.
These features have  been used in some  established route choice modeling studies \cite{Fosgerau2013_RL,Mai2015_nestedRL}. Note that the application can be treated as a finite horizon problem, as it is rational to assume that a driver only considers paths that contain a finite number of links (i.e. states).  
We use the aforementioned four features to define the reward function $r(s|s)$. For $\mu(s)$, we use the number of incoming links and outgoing links at each state, and the Link-Size feature proposed by \cite{Fosgerau2013_RL} to capture the ``overlapping-level'' of states. 
 

\noindent \textbf{Grid-world dataset.}
The trajectories in the grid-world dataset is generated from a $5\times 5$ grid-world. The agent starts from the bottom-leftmost grid and has to move to the top-rightmost grid, which is also the only terminal state. 
The actions available in each grid are move left, move right, move up, move down, or stay in the same grid. Unlike the Transport dataset, the grid-world dataset is non-deterministic in nature, i.e.,  there is a 80\% chance that the agent will move in accordance with its intended action, and the remaining 20\% probability is distributed evenly to the remaining available actions. 

The actual rewards are given in the top-left sub-figure of Fig. \ref{fig:Rewards_grid-world}. Given the actual rewards, we apply Bellman's value iteration \cite{Bellman1957} to obtain the optimal policy for the grid-world. This optimal policy is then used in the grid-world to generate 200 trajectories. Out of these 200 trajectories, 160 trajectories are used to form the training set and the remaining 40 constitutes the test set. The MCE and the GMCE models are both trained using the training set and then evaluated using the test set. There are $5\times 5$ states and $5\times 5$ features and each feature corresponds to a position on the $5\times5$ grid, which is a state. For each state, the feature corresponding to it will take a value of 1, while the other features take zero values.  The features used for the definition of the reward function are the same as those used for the $\mu(s)$ values, for all $s\in \cS$.

\subsection{Comparison Results}
\textbf{Transport dataset.} We place 80\% of the taxi trajectories into the training set and the remainder into the test set. We report the comparison results in Table \ref{tab:table_of_results_for_transport}. The first and second rows clearly show that the GMCE model return significantly larger log-likelihood values for both training and test sets, as compared to the MCE model.

\begin{table}[htb]\small
\centering
\begin{tabular}{l|l|l}
  & MCE     & GMCE\\ 
\hline
Log Prob. (training)& -2074.3 & \textbf{-1988.8}   \\ 
\hline
Log Prob. (test)    & -566.4  & \textbf{-523.4 }   \\ 
\hline\hline
Avg. Matching & 87.6\%  & \textbf{89.3}\%    \\ 
\hline
90\% Matching & 63.5\%  & \textbf{67.1\%}    \\ 
\hline
Prob. of most likely path & 61.0\%  & \textbf{63.0\% }  
\end{tabular}
\caption{\small Comparison of the performance of the different IRL models using the Transport dataset.}
\label{tab:table_of_results_for_transport}
\end{table}

 For both measures on the third and fourth rows (Avg. Matching and 90\% Matching), we see that the GMCE model outperforms the MCE ones. 
 On the last row of Table \ref{tab:table_of_results_for_transport}, we provide the average probabilities of the most likely paths produced by each of the models. In other words, these values indicate how likely is the model going to produce the most likely path given a pair of origin and destination. The results show that, on average, the GMCE models  tend to assign higher probabilities to  their most likely trajectories, as compared to the classical one.

\textbf{Grid-world dataset.} Fig. \ref{fig:Rewards_grid-world}b shows the rewards recovered by the MCE model. The top-rightmost grid is correctly assigned a zero reward. Moreover, except for grid (4,3), the other grids are assigned somewhat significant negative rewards, which is consistent with the actual rewards of the grid-world. However, there are two main issues occurring with these recovered rewards. First, in the actual rewards of the grid-world as shown in Fig. \ref{fig:Rewards_grid-world}a, all grids except the top-rightmost one have the same reward value of -10. But this not captured by the recovered rewards of the MCE model. Second, a few grids, notably grid (4,3), are assigned reward values that are similar to the top-rightmost grid, which makes  MCE's rewards are quite different from the actual one.

\begin{figure}[ht] 
\centering
    \includegraphics[width=0.9\linewidth]{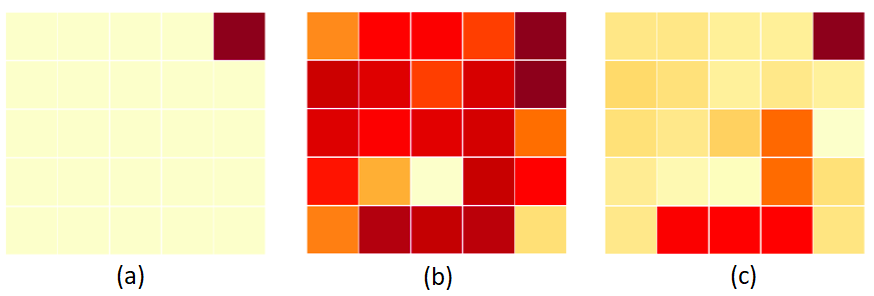} 
    \caption{\small Actual rewards and rewards recovered by the MCE and GMCE models. From left to right: Actual rewards, rewards by MCE, and rewards by the GMCE. } 
    \label{fig:Rewards_grid-world} 
\end{figure}

\vspace{-1em}
Fig. \ref{fig:Rewards_grid-world}c illustrates the rewards recovered by our GMCE model. Except for 5 grids near the bottom of the grid-world, all the other grids are assigned significantly negative rewards. These rewards are clearly better as compared to those from the MCE in two aspects. First, the GMCE model recovers a greater number of significantly negative grids. Second, the rewards assigned to all the grids excluding the top-rightmost grid are mostly uniform and quite similar in values.


We now move to  other comparing measures evaluating the ability of the models to recover demonstrated trajectories.    
The comparison results are reported in  Table \ref{tab:table_of_results_for_grid-world}. The first two rows  show the log-likelihood values attained by the two models (MCE, GMCE), which clearly indicate that the GMCE model  returns significantly larger log-likelihood values as compared to the classical MCE, on both training and test sets.
On the third and fourth rows, we see that the MCE  and   GMCE  perform equivalently in terms of Avg. Matching and 90\% Matching.
The reason is that both models have the same most likely path. This most likely path performs noticeably well as it matches most of the trajectories in the test set. However, the last row
shows that the GMCE  has a much higher chance of producing this most likely path as compared to the MCE model. 
In other words, the GMCE is more likely to produce a path that models the trajectories in the test set. In this sense, the GMCE  outperforms the MCE model. 

\vspace{-1em}
\begin{table}[ht]\small
\centering
\begin{tabular}{l|l|l}
  & MCE   & GMCE \\ 
\hline
Log Prob. (training)& -1984.2     & \textbf{-879.8}\textasciitilde{}   \\ 
\hline
Log Prob. (test)    & -533.1& \textbf{-241.9 } \\ 
\hline\hline
Avg. Matching & \textbf{87.8\% }  & \textbf{87.8\% } \\ 
\hline
90\% Matching & \textbf{67.5\% }  & \textbf{67.5\% } \\ 
\hline
Prob. of most likely path & 0.0\% & \textbf{8.0\% } 
\end{tabular}
\caption{\small Comparison of different IRL models using the grid-world dataset.}
\label{tab:table_of_results_for_grid-world}
\end{table}

\vspace{-2em}
\section{Conclusion}
In this work, we developed a generalized  IRL model that is consistent with the principle of the maximum causal entropy framework and holds a worst-case prediction log-loss guarantee. Our generalized model and algorithm have an advantage of being able to recover an additional expert's function that may capture the impact of the structure of the network on expert's policies. Our experiments clearly indicated the advantage of our generalized approach as compared to the classical ones. Many IRL models and applications would potentially benefit from our approach. In future work, we plan to develop generalized algorithms for IRL and imitation learning in the contexts of unknown or uncertain dynamics.  

\bibliographystyle{named}
\bibliography{refs}

\end{document}